\newtheorem{thm}{Theorem}[section]
\newtheorem{lem}[thm]{Lemma}
\title{SoFaiR: Single Shot Fair Representation Learning}
\author{
  Xavier Gitiaux \\
  Department of Computer Science\\
  George Mason University\\
  Fairfax, VA 22030 \\
  \texttt{xgitiaux@gmu.edu} \\
   \And
 Huzefa Rangwala \\
  Department of Computer Science\\
  George Mason University\\
  Fairfax, VA 22030 \\
  \texttt{rangwala@cs.gmu.edu} \\
}
\begin{document}

\maketitle

\begin{abstract}\small\baselineskip=9pt
To avoid discriminatory uses of their data, organizations can learn to map them into a representation
that filters out information related to sensitive  attributes. However, all existing methods in fair representation learning generate a fairness-information trade-off. To achieve different points on the fairness-information plane, one must train different models. In this paper, we first demonstrate that fairness-information trade-offs are fully characterized by rate-distortion trade-offs.  Then, we use this key result and propose SoFaiR, a single shot fair representation learning method that generates with one trained model many points on the fairness-information plane.
Besides its computational saving, our single-shot approach is, to the extent of our knowledge, the first fair representation learning method that explains what information is affected by changes in the fairness / distortion properties of the representation. Empirically, we find on three datasets that SoFaiR achieves similar fairness-information trade-offs as its multi-shot counterparts.
\end{abstract}

\keywords{Fairness, Representation Learning}

\section{Introduction}
Machine learning algorithms increasingly support decision-making systems in contexts where outcomes have long-term implications on the subject's well-being. A growing body of evidence find that algorithms can either replicate or exacerbate existing social biases against some demographic groups. These evidence span many domains, including recidivism risk assessment \cite{ProPublica2016}, face recognition \cite{pmlr-v81-buolamwini18a}, education data mining \cite{gardner2019evaluating}, and medical diagnosis \cite{pfohl2019creating}. 

As a result, organizations that collect data are increasingly scrutinized for the potentially discriminatory use of a data by downstream applications. A flexible solution to the data-science pipeline is to control unfair uses of a data before its ingestion by a machine algorithm. Fair representation learning \cite{zemel2013learning} follows this paradigm. It is a data pre-processing method that encodes the data into a representation or code $Z$, while removing its correlations with sensitive attributes $S$. 

Current approaches in fair representation learning ~\cite{zemel2013learning,madras2018learning,gitiaux2021fair,creager2019flexibly} generate a fairness-information trade-off and are inflexible with respect to their fairness-information trade-off, which is
set at training time. This limits the deployment of fair representation learning approaches. 
For example, in medical applications, at test time, a user may need to adjust the content of the representation depending on whether gender is an appropriate feature for the downstream task at play. On one hand, for a downstream application that predicts cardiovascular risk, gender is an important/appropriate feature that should be part of the representation of the data. On the other hand, for a downstream application that predicts payment of medical bills, gender should be irrelevant to the outcome and thus, filtered out from the representation. 
With existing methods in fair representation learning, it would have to re-train a fair encoder-decoder to meet each request. At issue are computational costs and lack of consistency between released representations, since the user cannot explain what changes occur between each data product it releases.

This paper introduces SoFaiR, \textbf{S}ingle Sh\textbf{o}t \textbf{Fai}r \textbf{R}epresentation, a method to generate a unfairness-distortion curve with \emph{one single trained model}. We first show that we can derive unfairness-distortion curves from rate-distortion curves. We can control for the mutual information $I(Z, S)$  between representation and sensitive attribute by encoding $X$ into a bitstream and by controlling for its entropy. 
We then construct a gated architecture that masks partially the bitstream conditional on the value of the Lagrangian multiplier in the rate-distortion optimization problem. The mask adapts to the fairness-information trade-off targeted by the user who can explore at test time the entire unfairness-distortion curve by increasingly umasking bits. For example, in the case of a downstream medical application for which gender is sensitive and needs to be filtered out, the user sets at test time the Lagrangian multiplier to its largest value, which lowers the resolution of the representation and in a binary basis, masks the rightmost tail of the bit stream.

Besides saving on computational costs, SoFaiR allows users to interpret what type of information is affected by movement along unfairness-distortion curves. Moving upward 
unmasks bits in the tail of the bitstream and thus, increases the resolution of the representation encoded in a binary basis. By correlating these unmasked bits with data features, the practitioner has at hand a simple method to explore what information related to the features is added to the representation as its fairness properties degrade.

Empirically, we demonstrate on three datasets that at cost constant with the number of points on the curve, SoFaiR constructs unfairness-distortion curves that are comparable to the ones produced by existing multi-shot approaches whose cost increases linearly with the number of points.
On the benchmark Adults dataset, we find that increasingly removing information related to gender degrades first how the representation encodes  working hours; then, relationship status and type of professional occupations; finally, marital status.

Our contributions are as follows: (i) we formalize fairness-information trade-offs in unsupervised fair representation learning with unfairness-distortion curves and show a tractable connection with rate-distortion curves; (ii) we propose a single shot fair representation learning method to control fairness-information trade-off at test time, while training a single model; and, (iii) we offer a method to interpret how improving or degrading the fairness properties of the resulting representation affects the type of information it encodes. 

Proofs of theoretical results, additional implementation and experimental details and additional results are in the appendix. The code publicly available \href{here}{here}\footnote{See XXX}.


\paragraph{Related Work.}
A growing body of machine learning literature explores how algorithms can adversely impact some demographic groups (e.g individuals self-identified as Female or African-American) (see \cite{chouldechova2018frontiers} for a review).
This paper is more closely related to methods that transform the data into a fair representation. Most of the current literature focus on supervised techniques that  tailor the representations to a specific downstream task (e.g ~\cite{madras2018learning,edwards2015censoring,moyer2018invariant,gupta2021controllable,jaiswal2019invariant}). However, practical implementations of fair representation learning would occur in unsupervised setting where organizations cannot anticipate all downstream uses of a data. This paper contributes to unsupervised fair representation (e.g ~\cite{gitiaux2021learning}) by (i) formalizing fairness-information trade-off in a distortion-rate phase diagram, which extends compression-based approaches (e.g ~\cite{gitiaux2021fair}); and (ii), proposing an adaptive technique that allows a single trained model to output as many points as desired on a unfairness-distortion curve. 

The implementation of SoFaiR relates to approaches in rate-distortion that learn adaptive encoder and vary the compression rate at test time (e.g. ~\cite{theis2017lossy,choi2019variable}. We borrow soft-quantization techniques and entropy coding to solve the rate-distortion problem that can be derived from the fair representation learning objective. Our adaptive mask relates to the gain function in \cite{cui2020g} that selects channels depending on the targeted bit rate. 
We rely on successive refinement methods from information theory (e.g ~\cite{kostina2019successive}) that use a common encoder for all points on the unfairness-distortion curve and add new information by appending bits to a initially coarse representation. To our knowledge, we are the first contribution to implement a deep learning multi-resolution quantization and apply it to the problem of single shot fair representation learning. 
\section{Problem Statement}

\subsection{Preliminaries.}

Consider a population of individuals represented by features $X\in \mathcal{X}$ and sensitive attributes in $S\in \mathcal{S}\subset\{0, 1\}^{d_{s}}$, where $d_{s} \geq 1$ is the dimension of the sensitive attributes space. 

The objective of unsupervised fair representation learning is to map features $X\in\mathcal{X}$ into a $d-$dimensional representation $Z\in \mathcal{Z}$ such that (i) $Z$ maximizes the information related to $X$, but (ii) minimizes the information related to sensitive attributes $S$. We control for the fairness properties of the representation $Z$ via its mutual information $I(Z, S)$ with $S$. $I(Z, S)$ is an upper bound to the demographic disparity of any classifier using $Z$ as input \cite{gupta2021controllable}. 
We control for the information contained in $Z$ by constraining a distortion $d(X, \{Z, S\})$ that measures how much information is lost when using a data reconstructed from $Z$ and $S$ instead of the original $X$. Therefore, fair representation learning is equivalent to solving the following unfairness-distortion problem
\begin{equation}
\label{eq: chp5_frl}
I(D)=\min_{f}I(Z, S) \mbox{ s.t. } d(X, \{Z, S\}) \leq D
\end{equation}
where $f:\mathcal{X}\rightarrow\mathcal{Z}$ is an encoder.
The unfairness-distortion function $I(D)$ defines the minimum mutual information between $Z$ and $S$ a user can expect when encoding the data with a distortion less or equal to $D$. The unfairness-distortion problem \eqref{eq: chp5_frl} implies a fairness-information trade-off: lower values of the distortion constraint $D$ degrade the fairness properties of $Z$ by increasing $I(D)$. 
\emph{The objective of this paper is given a data $X$ to obtain the unfairness-distortion function $I(D)$ with a single encoder-decoder architecture.}

\subsection{Unfairness Distortion Curves.}
\begin{figure}
    \centering
    \includegraphics[width=0.4\textwidth]{./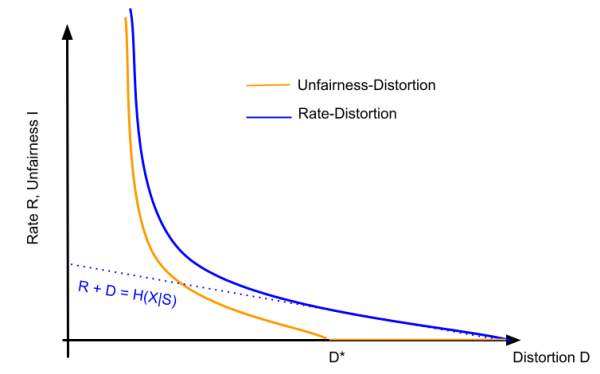}
    \caption{Unfairness-distortion curves $I(D)$ vs. rate-distortion curve $R(D)$. The unfairness distortion $I(D)$ can be deduced from the rate-distortion $R(D)$ curve by a downward shift equal to $D-H(X|S)$ if the distortion is less than $D^{*}$. }
    \label{fig: unfairness-distortion}
\end{figure}

Rate distortion theory characterizes the minimum  average number of bits $R(D)$ used to represent $X$ by a code $Z$ while the expected distortion incurred to reconstruct $X$ from the code is less than $D$. We show how to derive unfairness-distortion functions $I(D)$ from rate distortion functions $R(D)$.

\begin{thm}
\label{thm: ufd}
Suppose that the distortion is given by $d(X, \{Z, S\})=E[-\log(p(x|z, s)]$. Then, the unfairness distortion function $I(D)$ is equal to $R(D) +D -C$ if $\frac{\partial R}{\partial D}\leq -1$ and $0$ otherwise.  $C=H(X|S)$ is a constant that does not depend on $D$, but only on the data $X$. Moreover, $I(D)$ is a non-increasing convex function. 
\end{thm}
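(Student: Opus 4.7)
The plan is to exploit the Markov chain $S \to X \to Z$, which holds because the encoder $p(z\mid x)$ does not depend on $S$, and to reduce the fair-representation problem to a rescaled rate-distortion problem. Using $I(X, S; Z) = I(X; Z) + I(S; Z\mid X) = I(S; Z) + I(X; Z\mid S)$ together with $I(S; Z\mid X) = 0$, I would first derive the key identity
\[
I(Z; S) \;=\; I(X; Z) \;-\; I(X; Z \mid S) \;=\; I(X; Z) \;+\; H(X\mid Z, S) \;-\; H(X \mid S).
\]
Since the log-loss distortion $d(X, \{Z, S\}) = E[-\log p(x\mid z, s)]$ is bounded below by $H(X\mid Z, S)$, with equality when $p$ equals the true conditional, I can treat the feasibility constraint as $H(X\mid Z, S) \leq D$, and with $C = H(X\mid S)$ the identity becomes $I(Z; S) = I(X; Z) + H(X\mid Z, S) - C$.

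From this identity the theorem would follow in two stages. First, for any feasible encoder with actual distortion $D_0 \leq D$, the definition of $R$ gives $I(X; Z) \geq R(D_0)$, hence $I(Z; S) \geq R(D_0) + D_0 - C$; the rate-distortion-optimal encoder at $D_0$ saturates this bound. Combined with $I(Z; S) \geq 0$, this yields
\[
I(D) \;=\; \min_{D_0 \leq D} \max\bigl\{R(D_0) + D_0 - C,\ 0\bigr\}.
\]
Second, setting $g(D_0) = R(D_0) + D_0 - C$, convexity of $R$ makes $g$ convex with derivative $R'(D_0) + 1$. When $R'(D) \leq -1$, monotonicity of $R'$ implies $g$ is non-increasing on $[0, D]$, so the minimum equals $g(D) = R(D) + D - C$. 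When $R'(D) > -1$, a trivial encoder with $Z$ independent of $X$ achieves distortion $H(X\mid S) = C$ and $I(Z; S) = 0$; one checks that $D \geq C$ holds in this regime, so such an encoder is feasible and $I(D) = 0$.

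Non-increasingness of $I(D)$ is immediate because enlarging $D$ enlarges the feasible set, and convexity will follow from convexity of $g$ on the active region combined with the flat piece at $0$. The main obstacle is to show that the two regimes glue into a continuous convex function, i.e., that the threshold at which $R'(D) = -1$ coincides with the point where $g$ first hits $0$. I expect this to rest on the inequality $R(D_0) + D_0 \geq C$, obtained from $I(X; Z) \geq I(X; Z\mid S) = C - H(X\mid Z, S)$, which forces $g \geq 0$; together with $R(C) = 0$ (trivial encoder) and convexity of $g$, this pins the transition exactly at the point where $g$ reaches $0$ and $R'$ crosses $-1$, giving the clean dichotomy stated in the theorem.
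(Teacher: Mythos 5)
Your core argument is the paper's: the identity $I(Z;S)=I(X;Z)+H(X\mid Z,S)-H(X\mid S)$, obtained from the chain rule together with $I(S;Z\mid X)=0$, is exactly the paper's first lemma, and the reduction of the unfairness-distortion problem to the rate-distortion problem is the same. Where you differ is in packaging. The paper proves $I(D)=R(D)+D-C$ by a two-case analysis on whether the distortion constraint binds for the unfairness optimizer (using convexity of $R$ and the slope condition), and then needs two further lemmas ($I(D)=I(D^{*})$ for $D\geq D^{*}$, and $I(H(X\mid S))=0$ via the trivial encoder) to handle the flat regime. Your single variational formula $I(D)=\min_{D_{0}\leq D}g(D_{0})$ with $g(D_{0})=R(D_{0})+D_{0}-C$ convex and nonnegative subsumes all of these cases at once and is arguably cleaner. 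One step as written is wrong, however: in the regime $R'(D)>-1$ you assert that ``one checks that $D\geq C$ holds,'' so that the trivial encoder is feasible. That fails on the fair-encoding interval $D^{*}<D<C=H(X\mid S)$ --- precisely the regime the paper highlights in its phase diagram --- where the trivial encoder has distortion exactly $C>D$ and is \emph{not} feasible. The correct witness for $I(D)=0$ there is the rate-distortion-optimal encoder at $D^{*}\leq D$, not the trivial one, and your own final paragraph supplies the needed facts: $g\geq 0$ (since $R(D_{0})\geq I(X;Z\mid S)\geq C-D_{0}$), $g(C)=0$ (the trivial encoder gives $R(C)=0$), and $g$ convex with $g'(D^{*})=0$, so the global minimum of $g$ equals $0$ and is attained at $D^{*}$, whence $\min_{D_{0}\leq D}g(D_{0})=0$ for all $D\geq D^{*}$. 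With that repair in place of the ``$D\geq C$'' claim, your proof is complete and yields the stated dichotomy together with monotonicity and convexity of $I$.
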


\paragraph{Phase Diagram.} Figure \ref{fig: unfairness-distortion} shows a graphical interpretation of Theorem \ref{thm: ufd} in a $(D, R)$ plane. $(D^{*}, R^{*})$ denotes the point on the rate-distortion curve where $\frac{\partial R}{\partial D} =-1$. For $D\leq D^{*}$, the rate distortion curve is above the line defined by $R+D=H(X|S)$ and that difference between $I(D)$ and $R(D)$ is  $I(Z, S)$.  For $D>D^{*}$, the rate-distortion curve is the line $R+D=H(X|S)$ and the unfairness-distortion curve is the horizontal axis.  We call the regime $D^{*}\leq D\leq H(X|S)$ the fair-encoding limit where the distortion is less than its upper limit, but $Z$ is independent of sensitive attribute $S$.

\paragraph{Information bottleneck.} Theorem \ref{thm: ufd} implies that fairness-distortion trade-offs are fully characterized by rate-distortion trade-offs. A fundamental result in rate distortion theory (\cite{tishby2000information}) shows that the rate-distortion function is given by the information bottleneck
\begin{equation}
\label{eq: ib}
R(D)=\min_{f} I(X, Z) \mbox{ s.t } d(X, \{Z, S\}) \leq D.
\end{equation}
By solving this information bottleneck with $d(X, \{Z, S\})=H(X|Z, S)$ and invoking Theorem \ref{thm: ufd}, we can recover the unfairness-distortion $I(D)$. \cite{gitiaux2021fair} provide an intuition for this result. Controlling for the mutual information $I(Z, X)$ allows to control for $I(Z, S)$ because an encoder would not waste code length to represent information related to sensitive attributes, since sensitive attributes are provided directly as an input to the decoder. We can write the information bottleneck in its Lagrangian form as
\begin{equation}
\label{eq: ib2}
\min_{f} \beta I(Z, X) + E[-\log p(x|z, s)]
\end{equation}
The coefficient $\beta$ relates to the inverse of the slope of the rate-distortion curve: $\frac{\partial R}{\partial D}= -1/\beta$. Each value of $\beta$ generates a different point along the rate-distortion curve and thus, by Theorem \ref{thm: ufd} a different point along the unfairness-distortion curve. Higher values of $\beta$ lead to representations with lower bit rate and lower mutual information with $S$. To explore a unfairness-distortion curve, existing multi-shot strategies are prohibitively expensive as they learn a new encoder $f$ for each value of $\beta$.  Moreover, they cannot interpret how changes in $\beta$ affect the representation generated by the encoder.

\begin{figure}
    \centering
    \includegraphics[width=0.45\textwidth]{./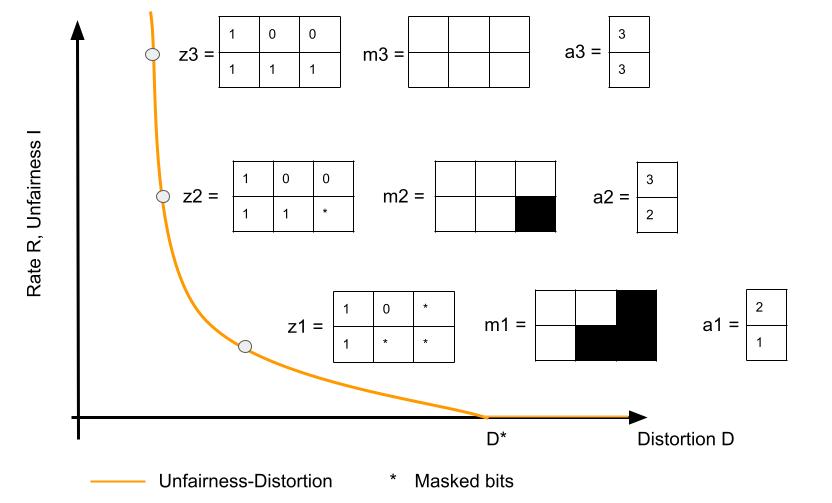}
    \caption{SoFaiR generates interpretable shifts along the unfairness-distortion curve. For a point $z1$, SoFair learns a mask $m1$ that hides bits on the tails of each dimension of the representation. By relaxing the mask to first $m2$ then $m3$, the number of bits used to represent the data increases from $a1$ to $a2$ and then $a3$; and, the representation moves to $z2$ then $z3$, which reduces the distortion at the expenses of degraded fairness properties. $z1$, $z2$ and $z3$ only differ by their masked bits (black squares). }
    \label{fig: unfairness-distortion-mask}
\end{figure}
\begin{figure*}
    \centering
    \includegraphics[width=1.0\textwidth]{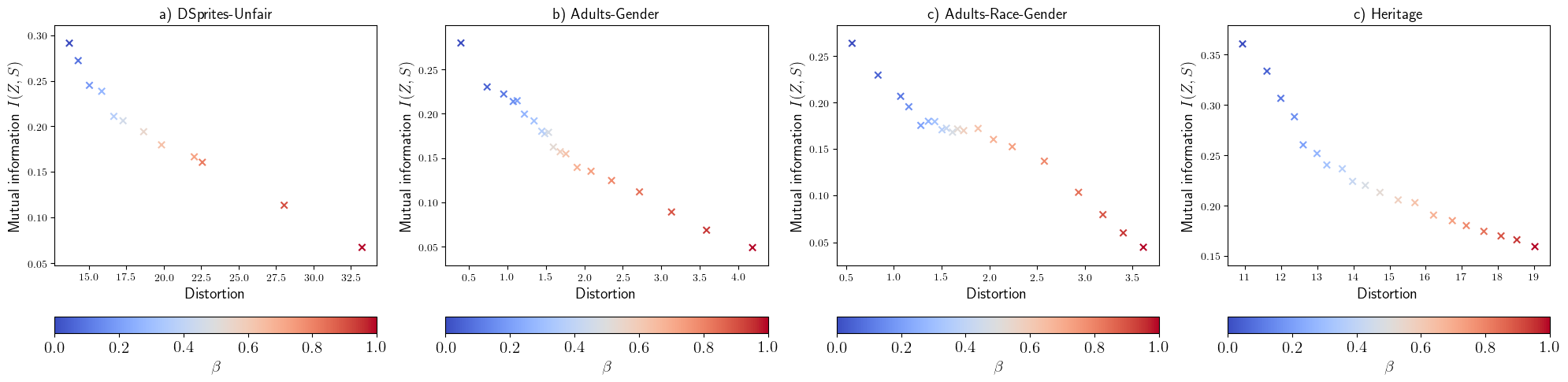}
    \caption{Unfairness-Distortion curves for a) DSprites, b) Adults-Gender,  c) Adults-Race-Gender(left) and d) Heritage. }
    \label{fig:fairness-distortion}
\end{figure*}
\section{Method: Single-Shot Unfairness-Distortion Curves.}
We propose a single-shot method, SoFaiR, to generate with one model as many points as desired on the unfairness-distortion curve.
An encoder $f:\mathcal{X}\rightarrow \{0, 1\}^{d \times r}$ common to all values of $\beta$  encodes the data into a $d$ dimensional latent variable $e\in [0, 1]^{d}$. We quantize each dimension $e_{j}$ of the $d-$dimensional latent variable with a resolution $r_{j}(\beta)$: we transform $e_{j}$ into a quantized representation $z_{j}(\beta) = [e * r(\beta)] / r(\beta)$, where $[.]$ denotes the rounding-up operation and $r(.)$ is a decreasing function of $\beta$. 

\subsection{Interpretability}
To maintain an interpretable relation between $z(\beta)$ and $z(\beta^{'})$ for $\beta^{'} <\beta$, we write $r_{j}(\beta)=2^{a_{j}(\beta)}$, where $a_{j}(.)$ is a decreasing function of $\beta$ for $j=1, ..., d$. Each dimension $z_{j}(\beta)$ of the quantized representation is then encoded into $a_{j}(\beta)$ bits. 
Moreover, for $\beta^{'}<\beta$, each dimension $j$ of the representation $z(\beta^{'})$ is made of the same $a_{j}(\beta)$ bits as $z_{j}(\beta)$, followed by $a_{j}(\beta^{'})-a_{j}(\beta)$ additional bits. Each dimension $z_{j}(\beta)$ of the quantized representation is encoded into $a_{j}(\beta)$ bits $b_{j,1}, b_{j, 2}, ..., b_{j, a_{j}(\beta)}$, where $b_{j, l}\in\{0, 1\}$ for $l=1, ..., a_{j}(\beta)$. For $\beta^{'}<\beta$ and for $j=1, ..., d$, we have
\begin{equation}\nonumber
    z_{j}(\beta^{'})= z_{j}(\beta) + \displaystyle\sum_{l=a_{j}(\beta)}^{a_{j}(\beta^{'})}b_{j, l}2^{-l}.
\end{equation}
Therefore, we have a tractable and interpretable relation between $z_{j}(\beta^{'})$ and $z_{j}(\beta)$.
This construction allows relaxing fairness constraints and decreasing distortion by unmasking additional bits for each dimension of the representation. Figure \ref{fig: unfairness-distortion-mask} shows an example for a 2-dimensional representation. A user who has released $z1$ with high distortion and low mutual information $I(Z, S)$
reduces distortion at the cost of fairness by unmasking one bit for the first dimension and two bits for the second and by generating $z2$.


\subsection{Quantization} 
We assign  a maximum number of bits $A> 0$ to encode each dimension of the representation. We apply a function $h_{e}$ to map the $d-$dimensional latent variable $e$ into $[0, 1]^{d \times A}$ and then, apply a rounding-up operator $[h_{e}(e)]$ to generate a $d\times A$ matrix, each row encoding a dimension of the representation with $A$ bits (see Figure \ref{fig: unfairness-distortion-mask} with $A=3$).  For each dimension $j$, we implement 
$a_{j}(.)$ by applying a function $h_{a}$ to map $e$ into a $d-$dimensional vector of $\mathbb{R+}^{d}$ and by computing $a_{j}(\beta) = A\left[1 - \tanh(h_{a}(e)_{j}\beta)\right)]$.

For each value of $\beta$ and each row of the matrix $[h_{e}(e)]$, we mask all the entries in position $l> a_{j}(\beta)$: for each row $j$ and each column $l$, we compute a soft mask $ m_{j, l}(\beta)= \sigma\left(a_{j}(\beta) - l\right)$
where $\sigma$ denotes a sigmoid activation; and then, we apply a rounding operator $[m_{j, l}(\beta)]$ to our soft mask.

For example, suppose that we encode in at most $A=8$ bits the embedding value $e=0.7$ and that $h(e)=e$. For $\beta=0$, we use all the bits ($a(0)=8$) and $z=0.699$; for $\beta=0.5$, $a=8(1 - \tanh((0.5)(0.7)))=5.3$ and we use only $5$ bits with $z=0.6875$.

The binarization caused by the rounding operation $[.]$ is not differentiable. We  follow ~\cite{theis2017lossy} and use a gradient-through approach that replaces $[.]$ by the identity during the backward pass of back-propagation, while keeping the rounding operation during the forward pass. 

\subsection{Entropy estimation.}
In our implementation, encoding and quantization are deterministic and $Z$ is completely determined by $X$: $H(Z|X)=0$ and $I(Z, X)=H(Z)$. 
To estimate the entropy of the representation $Z$, we use an auto-regressive factorization and write the discrete distribution  $P(z|\beta)$ as $P(z|\beta) = \prod_{j=1}^{d}P(z_{j}|z_{.<j}, \beta)$,
where the order of the dimension $j$ is arbitrary and $z_{.<j}$ denotes the dimension between $1$ and $j-1$.

We approximate the discrete distribution $p(z_{j}|z_{.<j}, \beta)$ by a continuous distribution $q(z_{j}|z_{.<j}, \beta)$ such that the probability mass of $q$ on the interval $[z_{j} - 1 / 2^{a_{j}(\beta)}, z_{j} + 1 / 2^{a_{j}(\beta)}]$ is equal to $p(z_{j}|z_{.<j}, \beta)$. We can show then that $H(z|\beta)$ is bounded above by $ \sum_{j=1}^{d} E_{z\sim p(z)}\log\int_{-1/2^{a_{j}(\beta)}}^{1/2^{a_{j}(\beta)}}q(z_{j} + u|z_{.<j}, \beta) du$ (see Appendix). We follow \cite{salimans2017pixelcnn++} and for each $j=1, ..., d$, we model  $q(.|z_{.<j}, \beta)$ as a mixture of $K$ logistic distributions with means $\mu_{j, k}(\beta)$, scales $\gamma_{j, k}(\beta)$ and mixtures probability $\pi_{j, k}(\beta)$, which allows a tractable formulation of our upper-bound (see Appendix). The resulting adaptive information bottleckneck \eqref{eq: ib2} is:
\begin{equation}
\label{eq: rd}
\min_{g, f, q, \mu, \gamma, \pi}E[-\log p(x|g(z, s, \beta)) + \beta H(z|\beta)],
\end{equation}
where $g$ is a decoder that reconstructs the data $x$ from $z$, $s$ and $\beta$. The expectation is taken over the data $x$ and values of $\beta$ uniformly drawn in $[0,1]$. 

\begin{table*}[h]
\small
    \centering
    \begin{tabular}{c|c|c|c|ccc}
      Dataset & Model   &  AUFDC &  Average per step (ms) & \multicolumn{3}{c}{Total time ($10^{6}$ ms): CPU/GPU ($\downarrow$)} \\
      & &($\downarrow$) & CPU / GPU &  4 points & 8 points & 16 points\\
      \hline
       DSprites-UnfaiR  &  SoFaiR & 0.21 & $79\pm 1.2$ / $55 \pm 0.2$ & $\bf{18.5}/\bf{13.0}$ & $\bf{18.5}/\bf{13.0}$& $\bf{18.5}/\bf{13.0}$\\
        &  SoFaiR-NOS & 0.25 & $78\pm 1.1$ / $54 \pm 0.3$ & $\bf{18.4}/\bf{13.1}$ & $\bf{18.4}/\bf{13.1}$& $\bf{18.4}/\bf{13.1}$\\
        &  MSFaiR & \textbf{0.14} &  $76\pm 3.2$ / $55 \pm 0.3$  &$71.4/ 52.1$ & $142.9/ 104.2$ & $285.8/ 208.0$ \\
         \hline
       Adults-Gender  &  SoFaiR & \textbf{0.32} &  $91 \pm 3.3 / 6  \pm 0.0$ & $\bf{2.3}/ \bf{0.1}$ & $\bf{2.3}/ \bf{0.1}$ & $\bf{2.3}/ \bf{0.1}$\\
					&  SoFaiR-NOS& 0.58 &  $91 \pm 4.3 / 6  \pm 0.0$ & $\bf{2.3}/ \bf{0.1}$ & $\bf{2.3}/ \bf{0.1}$ & $\bf{2.3}/ \bf{0.1}$\\        
        &  MSFaiR & 0.35 &  $92 \pm 1.0 / 6  \pm 0.0$ & $9.4/ 0.6$ & $18.9/ 1.1$ & $37.7/ 2.3$ \\
        \hline
       Adults-Gender-Race  &  SoFaiR & \textbf{0.30} & $92 \pm 4.3 / 6  \pm 0.0$ & $\bf{2.4}/ \bf{0.1}$ & $\bf{2.4}/ \bf{0.1}$ & $\bf{2.4}/ \bf{0.1}$\\
          &  SoFaiR-NOS & 0.53 & $92 \pm 4.0 / 6  \pm 0.0$ & $\bf{2.4}/ \bf{0.1}$ & $\bf{2.4}/ \bf{0.1}$ & $\bf{2.4}/ \bf{0.1}$\\
        &  MSFaiR &0.36 &  $90 \pm 4.0 / 6  \pm 0.0$ & $9.1/ 0.6$ & $18.3/ 1.1$ & $36.6/ 2.3$ \\
         \hline
       Heritage &  SoFaiR & 0.62 & $125 \pm 3.0/8.6\pm 1.6$  & $\bf{3.7/0.3}$  & $\bf{3.7/0.3}$ & $\bf{3.7/0.3}$ \\
        &  SoFaiR-NOS & 0.73 & $123 \pm{2.5}/10 \pm{0.3}$ & $\bf{3.7/0.3}$  & $\bf{3.7/0.3}$ & $\bf{3.7/0.3}$ \\
        &  MSFaiR & \textbf{0.56} & $123 \pm{3.1} / 10 \pm{0.8}$ & $14.7/1.2$&  $29.4/2.3$& $58.7/4.8$  \\  
    \end{tabular}
    \caption{Area under the unfairness-distortion curve and computational costs of single-shot (SoFaiR) versus multi-shot (MSFaiR) fair representation learning methods. Lower ($\downarrow$) is better. This shows that SoFaiR provides unfairness-distortion curves with similar AUFDC as MSFaiR, but at much lower computational costs.}
    \label{tab: computational_cost}
\end{table*}
\section{Experiments}
We design our experiments to answer the following research questions: 
(RQ1) Does SoFaiR generate in a single-shot unfairness-distortion curves comparable to the ones generated by multi-shot models? (RQ2) Do representations learned by SoFaiR offer to downstream tasks a fairness-accuracy trade-off on par with state-of-the-art multi-shots techniques in unsupervised fair representation learning?
(RQ3) What information is present in the additional bits that are unmasked as we move up the unfairness-distortion curve? Architecture details and hyperparameter values are in the supplementary file.

\subsection{Datasets}

We validate our single-shot approach with three benchmark datasets: \textbf{DSprite-Unfair},  \textbf{Adults}  and \textbf{Heritage}.

\textbf{DSprite Unfair} is a variant of the DSprites data 
and contains $64$ by $64$ black and white images of various shapes (heart, square, circle). 
We modify the sampling to generate a source of potential unfairness and use as sensitive attribute a variable that encodes the orientation of the shapes.

The \textbf{Adults} dataset 
contains $49K$ individuals with information on professional occupation, education attainment, capital gains, hours worked, race and marital status. We consider  as sensitive attribute, gender in \textbf{Adults-Gender}; and, gender and race in \textbf{Adults-Gender-Race}. 


The \textbf{Health Heritage} dataset 
contains $95K$ individuals with $65$ features related to clinical diagnoses and procedures, lab results, drug prescriptions and claims payment aggregated over $3$ years (2011-2013). We define as sensitive attributes an intersection variable of gender and age.

\subsection{Unfairness-distortion curves.}
To plot unfairness-distortion curves, we estimate the distortion as the  $l_{2}-$ loss between reconstructed and observed data, which is equal to  $E_{x, z, s}[-\log p(x|z, s)]$ (up to a constant) if the distribution of $p(X|Z, S)$ $X$ is an isotropic Gaussian. 
We also approximate the mutual information $I(Z,S)$ with an adversarial lower bound (see appendix):
\begin{equation}
\label{eq: lb_mutual}
    I(Z, S)\geq H(S) - \min_{c}E_{s,z}[-\log c(s|z)],
\end{equation}
where $c$ is an auditing classifier that predicts $S$ from $Z$.
Unlike adversarial methods (e.g. ~\cite{edwards2015censoring}), we do  not use this bound for training our encoder-decoder, but only for post mortem evaluation of the unfairness-distortion trade-off generated by SoFaiR. In practice, we train a set of $5$ fully connected neural networks $c: \mathcal{Z}\rightarrow\mathcal{S}$ and use their average cross-entropy to estimate the right hand side of \eqref{eq: lb_mutual}. 

\subsection{Area under unfairness-distortion curves.} To quantitatively compare the unfairness-distortion curves of competing approaches, we introduce 
the area under unfair-distortion curve, AUFDC.
A lower AUFDC means that a model achieve lower $I(Z, S)$ for a given level of distortion. 
To allow comparison across datasets, we normalize the value of AUFDC by the area of the rectangle $[0, D_{max}]\times[0, I_{max}]$, where $D_{max}$ is the distortion obtained by generating random permutation of a representation and $I_{max}$ is the value of the lower bound \eqref{eq: lb_mutual} when auditing raw data. 

\subsection{Comparative Methods.}
\paragraph{Methods.} We compare SoFaiR with five fair representation methods: 
(i) \textbf{LATFR} (e.g ~\cite{madras2018learning}) controls for $I(Z, S)$ by using the lower bound \eqref{eq: lb_mutual}
; (ii) \textbf{MaxEnt-ARL} \cite{roy2019mitigating} replaces the adversary's cross-entropy of LATFR with the entropy of the adversary's predictions; (iii) \textbf{CVIB} \cite{moyer2018invariant} replaces adversarial training with an information-theory upper bound of $I(Z, S)$; (iv)\textbf{$\beta-VAE$} \cite{higgins2016beta} solves the information bottleneck \eqref{eq: ib2} by variational inference, which upper-bounds $I(Z, S)$, provided that the decoder uses the sensitive attribute as input \cite{gitiaux2021fair}; (v) \textbf{MSFaiR} reproduces SoFaiR, but solves the rate-distortion problem \eqref{eq: rd} separately for different values of $\beta$. All methods have the  same autoencoder architecture.
Most methods are tailored to a specific downstream task. In our unsupervised setting, we repurpose them by replacing the cross-entropy of the downstream classification task with our measure of distortion $E[-\log p(x|Z, s)]$. 



\paragraph{Pareto fronts.} We construct  Pareto fronts that compare the unfairness properties of the representation to the accuracy $A_{y}$ of a downstream task classifier that predicts a downstream label $Y$ from $Z$.  Critically in our unsupervised setting, we do not provide the labels $Y$ to encoder-decoders. To match existing benchmarks, we measure the unfairness properties of the representation with the average accuracy $A_{s}$ of auditing classifiers that predict $S$ from $Z$. The higher $A_{y}$ for a given $A_{s}$, the better is the fair representation method.


\section{Results}

\begin{figure}
    \centering
    \includegraphics[width=0.5\textwidth]{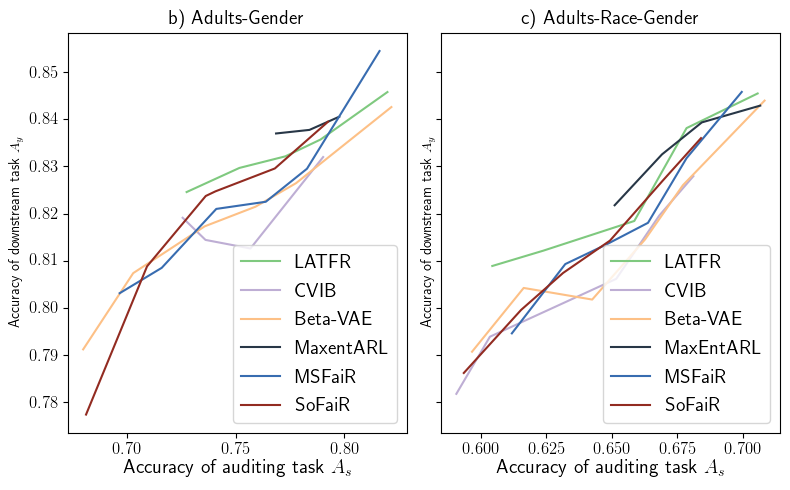}
    \caption{Pareto fronts for a) Adults-Gender (left),  b) Adults-Race-Gender(right). The downstream task label is whether income is larger than 50K. }
    \label{fig:pareto-front}
\end{figure}

\begin{figure*}[h]
    \centering
    \includegraphics[width=0.8\textwidth]{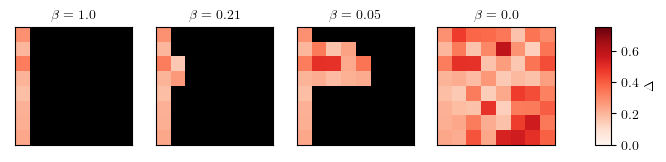}
    \caption{Unmasked bits for different values of the fairness coefficient $\beta$ for the Adults-Gender-Race dataset. Each row is a dimension of $Z$. Each colored square is an unmasked bit. Black squares represent masked bits. Darker bits exhibit higher bit demographic disparity $\Delta(b)$. As $\beta$ decreases, SoFaiR unmasks more bits for each dimension of $Z$. And, bits with higher disparity are more likely to be the last unmasked.}
    \label{fig:inter_adults}
\end{figure*}

\subsection{RQ1: Single Shot Fairness-Distortion Curves.}
Figure \ref{fig:fairness-distortion} shows SoFaiR's unfairness-distortion curves for DSprites (left), Adults-Gender (middle left), Adults-Gender-Race (middle right) and Heritage (right). By increasing at test time the value of $\beta$, the user can smoothly move down the unfairness-distortion curve: values of $\beta$ close to zero lead to low distortion - high $I(Z, S)$ points; values of $\beta$ close to one lead to higher distortion - low $I(Z, S)$ points. Figure \ref{fig:fairness-distortion} demonstrates that a solution to the adaptive bottleneck \eqref{eq: rd} allows one single model to capture different points on the unfairness-distortion curve. This result is consistent with Theorem \ref{thm: ufd} and illustrates that controlling for  the bit rate of $Z$ via its entropy $H(Z)$ is sufficient to control for $I(Z, S)$.

\paragraph{Ablation study.}
AUFDC scores in Table \ref{tab: computational_cost} show that SoFaiR is competitive with its multi-shot counterpart: SoFaiR outperforms MSFaiR for Adults-Gender and Adults-Gender-Race (lower AUFDC), but is slightly outperformed for Heritage and DSprites-Unfair (higher AUFDC).
On the other hand, SoFaiR unambigously outperforms SoFaiR-NOS, a model similar to SoFaiR but with a decoder that does not use the sensitive attribute $S$ as side-channel.
The relation between unfairness-distortion and rate-distortion curves in Theorem \ref{thm: ufd} is tractable only if we use $E[-\log(p(x|z, s)]$ as a measure of distortion and does not hold if we use  $E[-\log(p(x|z)]$ instead and the decoder does not receive $S$ as side channel.

\paragraph{Computational costs.}
Table \ref{tab: computational_cost} compares the computational costs of SoFaiR and MSFaiR. We average the cpu and gpu times of a training step 
over 10 profiling cycles and the number of training epochs. We perform the experiment on a AMD Ryzen Threadripper 2950X 16-Core Processor CPU and a NVIDIA GV102 GPU. 
The average computing cost of a training step is similar for SoFaiR and MSFaiR since both methods rely on similar architecture. However, SoFaiR's computational costs remain constant as the number of points on the unfairness-distortion curve increases, while MSFaiR's costs increase linearly. For example, 16 points for the DSprites-Unfair require about 137 hours of running time with MSFaiR and only 8 hours with SoFaiR.

\subsection{RQ2: Pareto Fronts}
In Figure \ref{fig:pareto-front}, the larger the downstream classifier's accuracy $A_{y}$ for a given value of the auditor's accuracy $A_{s}$, the better the Pareto front. 
First, SoFaiR and MSFaiR's Pareto fronts are either as good or better than the ones generated by $LATFR$, $CVIB$, $Maxent-ARL$ and $\beta-VAE$. Exceptions to this observations include Adults-Gender-Race for low values of $A_{s}$ where $LATFR$ outperforms SoFaiR/MSFaiR. Rate distortion approaches are competitive, which confirms the tight connection between rate-distortion and unfairness-distortion as presented in Theorem \ref{thm: ufd}.
Both SoFaiR and MSFaiR offer more consistent performances than $LATFR$ or $Maxent-ARL$ whose representations keep leaking information related to $S$ for Adults-Gender 
regardless of the constraints placed on the adversary. And, $\beta-VAE$ exhibits non-monotonic behavior for 
Adults-Gender. Second, Figure \ref{fig:pareto-front} shows that SoFaiR's Pareto fronts are similar to the ones offered by MSFaiR, its multi-shot counterpart. This result is consistent with AUFDC scores in Table \ref{tab: computational_cost}. Pareto fronts for Heritage and DSPrites-Unfair are in the supplementary file.

\begin{figure}[h]
    \centering
    \includegraphics[scale=0.5]{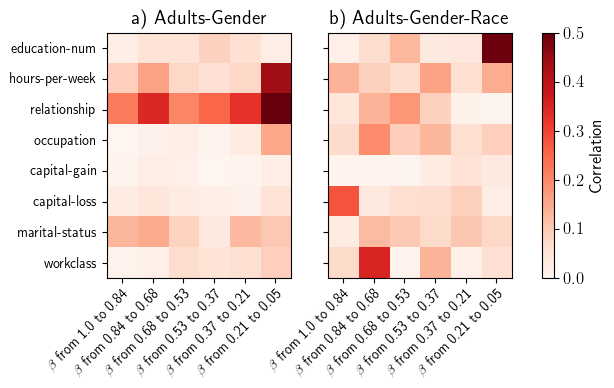}
    \caption{Additional information provided by refining the representation for Adults-Gender (left) and Adults-Gender-Race (right) dataset. This shows the correlation between data features and additional bits that SoFaiR unmasks when loosening the fairness constraint. Correlations are computed between the data features and the first principal component of newly unmasked bits. Each column corresponds to a decrease of $\beta$ as labeled on the horizontal axis.}
    \label{fig:inter_gender_info}
\end{figure} 

\subsection{RQ3: Interpretability}

\paragraph{Bit Disparity.}
We measure the disparity of each bit $b$ as $\Delta(b)=\max_{s\in \mathcal{S}}|P(b=1|S=s) - P(b=1|S\neq s)|$. Bit disparity is the demographic disparity of a classifier that returns 1 if $b=1$ and 0 otherwise. Moreover, we show in the supplementary file that $\max_{b}\Delta(b)$ is a lower bound of $I(Z, S)$: a large value of $\Delta(b)$ means that the presence of bit $b$ in the bitstream will significantly degrade the fairness properties of $Z$. In Figure \ref{fig:inter_adults}, loosening the fairness constraint at test time -- decreasing $\beta$ -- unmasks more bits, while keeping the leftmost bits identical to ones obtained with higher values of $\beta$. SoFaiR degrades gracefully the fairness properties of the representation by increasing its resolution. 
Figure \ref{fig:inter_adults} also shows that for Adults-Gender-Race, bits with higher disparity $\Delta$ are less likely to be unmasked with stringent fairness constraints -- high $\beta$ -- and are only active when more leakages related to sensitive attribute are tolerated -- low $\beta$. Therefore, by forcing SoFaiR to generate many points on the unfairness-distortion curve, we obtain an information ordering that pushes to the tail of the bitstreams the bits the most correlated with $S$. We observe a similar pattern with Adults-Gender (supplementary file).

\paragraph{Fairness and information loss.}
Unlike alternative methods in fair representation learning, SoFaiR offers a simple tool to interpret at test time what information is lost as the fairness constraint tightens. In Figure \ref{fig:inter_gender_info}, we plot for Adults-Gender and Adults-Gender-Race how additional bits unmasked as $\beta$ decreases correlate with data features. As we move up the unfairness-distortion curve for Adults-Gender, additional information first relates to marital status; then, occupation type, relationship status and hours-per-week. It means that for downstream tasks that predict marital status, a representation on the bottom right of the unfairness-distortion curve (high distortion, low $I(Z, S)$) is sufficient to achieve good accuracy. But, downstream tasks that need hours-per-week would find more difficult to obtain good accuracy without moving up the unfairness-distortion curves, i.e leaking additional information related to sensitive attribute $S$.


\section{Conclusion}
In this paper, we present SoFaiR, a single-shot fair representation learning method that allows with one trained model to explore at test time the fairness-information trade-offs of a representation of the data. Our implementation relies on a tight connection between rate-distortion and unfairness-distortion curves. SoFaiR is a step toward practical implementation of unsupervised fair representation learning approach, all the more as users can now explain what information is lost as the fairness properties of the representation improve.
\section*{Acknowledgements}
This project was supported by resources provided by the Office of Research Computing at George Mason University (URL: https://orc.gmu.edu) and funded in part by grants from the National Science Foundation (Awards Number 1625039 and 2018631).
\appendix
\section{Appendix}
\subsection{Proof of Theorem 2.1}
First, we show the following identity: 
\begin{lem}
\label{lem: 1}
$I(Z,S)=I(Z, X) + H(X|Z, S) - H(X|S)$.
\end{lem}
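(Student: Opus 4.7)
The plan is to prove this identity by applying the chain rule of mutual information to $I(X,Z;S)$ in two different ways and then invoking the Markov structure inherent in the fair representation setup. Because the encoder $f:\mathcal{X}\to\mathcal{Z}$ is a function of $X$ alone and does not see $S$, the sensitive attribute $S$ is conditionally independent of $Z$ given $X$, so $S - X - Z$ forms a Markov chain and $I(Z;S|X)=0$.

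Concretely, I would start from the two chain-rule expansions
\begin{equation}\nonumber
I(X,Z;S)=I(X;S)+I(Z;S|X)=I(Z;S)+I(X;S|Z).
\end{equation}
The Markov property kills $I(Z;S|X)$, leaving $I(Z;S)=I(X;S)-I(X;S|Z)$. I would then rewrite the two remaining terms in entropic form as $I(X;S)=H(X)-H(X|S)$ and $I(X;S|Z)=H(X|Z)-H(X|Z,S)$, and regroup the four entropies as $[H(X)-H(X|Z)]+[H(X|Z,S)-H(X|S)]$, which is exactly $I(X,Z)+H(X|Z,S)-H(X|S)$, matching the claim.

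The only subtle step is the justification of $I(Z;S|X)=0$. It is not stated as an explicit assumption in the preliminaries, but it is implicit in the problem setup, where the encoder takes only $X$ as input (possibly stochastically, but with randomness independent of $S$). Making this Markov assumption explicit in the statement or in the first line of the proof is the main thing to be careful about; everything afterwards is routine bookkeeping with standard entropy identities, so I do not anticipate any genuine technical obstacle. As a sanity check, this identity also offers the clean interpretation that shows up later in Theorem~\ref{thm: ufd}: with the log-loss distortion $d(X,\{Z,S\})=H(X|Z,S)$, the unfairness term $I(Z,S)$ equals the rate $I(Z,X)$ plus $D$ minus the constant $H(X|S)$, exactly as the theorem claims in its interior regime.
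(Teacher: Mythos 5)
Your proof is correct and takes essentially the same route as the paper's: both hinge on the Markov property $I(Z;S|X)=0$ (the encoder sees only $X$) together with repeated applications of the chain rule for mutual information, followed by rewriting the surviving terms as entropies. The only cosmetic difference is that you expand $I(X,Z;S)$ in two ways while the paper expands $I(Z;\{X,S\})$ and then $I(X;\{Z,S\})$, which makes your version slightly more direct but not substantively different.
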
 

\begin{proof}
The proof of Lemma \ref{lem: 1} relies on multiple iterations of the chain rule for mutual information:
\begin{equation}\nonumber
\begin{split}
    I(Z, S) &\overset{(a)}{=} I(Z, \{X, S\}) - I(Z, X|S) \\
    & \overset{(b)}{=}I(Z, X) + I(Z, S|X)  - I(Z, X|S) \\
    & \overset{(c)}{=} I(Z, X)- I(Z, X|S) \\
    & \overset{(d)}{=}I(Z, X) - I(X, \{Z, S\}) + I(X, S) \\
    & \overset{(e)}{=} I(Z, X) + H(X|Z, S) - H(X|S)
    \end{split}
\end{equation}
where $(a)$, $(b)$ and $(d)$ use the chain rule for mutual information; and, $(c)$ uses the fact that $Z$ is only encoded from $X$ and from $S$, so $H(Z|X, S)=H(Z|X)$ and $I(Z, S|X)=H(Z|X)-H(Z|X, S)=0$. And $(e)$ uses the fact that $I(X, S)=H(X) - H(X|S)$ and $I(X, \{Z, S\})=H(X) - H(X|Z, S)$. 
\end{proof}

Lemma \ref{lem: 1} implies that if the distortion is $d(X, \{Z, S\})=H(X|Z, S)$, the unfairness-distortion function is given by
\begin{equation}
\label{eq: chp5_frl_proof}
\begin{split}
I(D)= &\min_{f}I(Z, X) + H(X|Z, S) - H(X|S) \\
& \mbox{ s.t. } H(X, \{Z, S\}) \leq D \\
\end{split}
\end{equation}

Second, a fundamental theorem in rate-distortion \cite{cover2012elements} shows that if the distortion is $d(X, \{Z, S\})=H(X|Z, S)$ the rate-distortion function is given by 
\begin{equation}
\label{eq: ib_proof}
R(D)=\min_{f} I(X, Z) \mbox{ s.t } H(X|Z, S) \leq D,
\end{equation}
and that $R(D)$ is a non-increasing convex function. The next Lemma shows how solution of the minimization problem \eqref{eq: ib_proof} solves the minimization problem \eqref{eq: chp5_frl_proof} whenever $\frac{\partial R(D)}{\partial D}\leq -1$

\begin{lem}
\label{lem: 2}
Let $D\geq 0$ be a distortion value. Assume that $\frac{\partial R(D)}{\partial D}\leq -1$. A solution $f^{*}$ of the minimization \eqref{eq: ib_proof} for $D$ is also solution of \eqref{eq: chp5_frl_proof}.
\end{lem}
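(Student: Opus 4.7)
The plan is to reduce the unfairness–distortion minimization to the rate–distortion one by using Lemma \ref{lem: 1} to rewrite the objective, and then use convexity of $R$ together with the slope assumption to show that the rate–distortion optimum is feasible and optimal for the unfairness problem.

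First, I would substitute Lemma \ref{lem: 1} into problem \eqref{eq: chp5_frl_proof}. Because $H(X|S)$ does not depend on $f$, the unfairness problem at distortion level $D$ is equivalent to
\begin{equation}\nonumber
\min_{f} \; I(Z, X) + H(X|Z, S) \quad \mbox{s.t.} \quad H(X|Z, S) \leq D.
\end{equation}
For any feasible $f$, let $D'=H(X|Z,S)\leq D$. By definition of the rate–distortion function, $I(Z,X)\geq R(D')$, so the objective is bounded below by the function $\phi(D'):=R(D')+D'$. Therefore
\begin{equation}\nonumber
\min_{f}\bigl[I(Z, X) + H(X|Z, S)\bigr] \;\geq\; \min_{D'\in[0,D]} \phi(D').
\end{equation}

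Next, I would analyze $\phi$. Since $R$ is convex and non-increasing, its derivative $R'$ is non-decreasing, so for any $D'\leq D$ the slope assumption gives $R'(D')\leq R'(D)\leq -1$, hence $\phi'(D')=R'(D')+1\leq 0$. Thus $\phi$ is non-increasing on $[0,D]$ and its minimum over this interval is attained at $D'=D$, yielding the lower bound $\phi(D)=R(D)+D$ on the objective (and recovering $I(D)=R(D)+D-H(X|S)$ promised in Theorem \ref{thm: ufd}).

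Finally, I would verify that any minimizer $f^{*}$ of \eqref{eq: ib_proof} at $D$ attains this lower bound. The strict inequality case $R'(D)<-1$ forces $R$ to be strictly decreasing near $D$, which by a standard argument (otherwise the same $f^{*}$ would give $R(D')=R(D)$ for some $D'<D$, contradicting strict decrease) implies $H(X|Z,S)=D$ at $f^{*}$; the boundary case $R'(D)=-1$ can be handled separately by noting $\phi(D')=\phi(D)$ for any feasible $D'$ along a flat segment, so $f^{*}$ still attains the minimum. Combined with $I(Z,X)=R(D)$ at $f^{*}$, the objective of the unfairness problem evaluated at $f^{*}$ equals $R(D)+D-H(X|S)$, matching the lower bound. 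Hence $f^{*}$ solves \eqref{eq: chp5_frl_proof}.

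The main obstacle I anticipate is the edge case where the rate–distortion constraint is not active at $f^{*}$, i.e.\ where $H(X|Z,S)<D$; handling this requires the strict-slope argument above to rule it out when $R'(D)<-1$, and a separate monotonicity-of-$\phi$ argument to cover $R'(D)=-1$, so the two sub-cases of the hypothesis $R'(D)\leq -1$ need to be treated together to make the conclusion uniform.
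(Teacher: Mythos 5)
Your proof is correct and follows essentially the same route as the paper's: rewrite the objective via Lemma \ref{lem: 1}, lower-bound $I(Z,X)$ by $R(D')$ at the achieved distortion $D'\leq D$, and use convexity plus $\frac{\partial R}{\partial D}\leq -1$ to show $R(D')+D'\geq R(D)+D$, which $f^{*}$ attains. The only differences are presentational — you minimize $\phi(D')=R(D')+D'$ over all feasible $D'$ where the paper case-splits on whether the constraint binds at the unfairness optimizer, and you are more explicit than the paper about why the constraint binds at $f^{*}$ (indeed, $R'\leq -1$ on $[0,D]$ by convexity already forces $R$ strictly decreasing there, so your separate treatment of the boundary case $R'(D)=-1$ is not actually needed).
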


\begin{proof}
At the optimum, the constraint in \eqref{eq: ib_proof} is binding and thus, that $H_{f^{*}}(X|Z, S)=D$, where the subs-script $f^{*}$ reminds that the code $Z$ depends on $f^{*}$. Consider now a solution $g^{*}$ of the minimization  \eqref{eq: chp5_frl_proof} for a distortion $D$. We consider two cases: case (I) the constraint is binding for $g^{*}$ in \eqref{eq: chp5_frl_proof}; case (II) the constraint is not binding for $g^{*}$ in \eqref{eq: chp5_frl_proof}.

\textbf{case (I)}: $H_{g^{*}}(X|Z, S)=D$ and we have
\begin{equation}
\begin{split}
I(D) &= I_{g^{*}}(Z, X) + H_{g^{*}}(X|Z, S) - H(X|S)\\
     & = I_{g^{*}}(Z, X) + D -H(X|S) \\
     & \overset{(a)} {\geq} I_{f^{*}}(Z, X) + D -H(X|S), \\
     \end{split}
\end{equation}
where $(a)$ uses the fact that $f^{*}$ is solution of  \eqref{eq: ib_proof} and that $H_{g^{*}}(X|Z, S)\leq D$. Therefore, since $H_{f^{*}}(X|Z, S)\leq D$, $f^{*}$ is also solution of \eqref{eq: chp5_frl_proof}.

\textbf{case (II)}: Let denote $D^{'}$ the value of the distortion achieved by $g^{*}$. Then, $D^{'}= H_{g^{*}}(X|Z, S) < D$. We have
 \begin{equation}
 \label{eq: id}
\begin{split}
I(D) &= I_{g^{*}}(Z, X) + H_{g^{*}}(X|Z, S) - H(X|S)\\
     & = I_{g^{*}}(Z, X) + D^{'} -H(X|S) \\
     & \overset{(a)} {\geq} R(D^{'}) + D^{'} -H(X|S), \\
     \end{split}
\end{equation}
where $(a)$ follows from the definition of $R(D^{'})$. By convexity of the rate-distortion function, we have that
\begin{equation}
\label{eq: conv}
\begin{split}
R(D^{'}) - R(D) & \overset{(a)}{\geq} \frac{\partial R(D)}{\partial D} (D^{'}- D) \\
&\overset{(b)}{\geq} (D- D^{'}),
\end{split}
\end{equation}
where $(a)$ uses the convexity of $R(D)$ and that $D^{'}<D$ and $(b)$ uses that $\frac{\partial R(D)}{\partial D}\leq -1$. Hence, by combining \eqref{eq: id} and \eqref{eq: conv}, we have
\begin{equation}\nonumber
I(D) \geq R(D) +D -H(X|S) = I_{f^{*}}(Z, X) + D -H(X|S).
\end{equation}
Therefore, $f^{*}$ is also solution of the minimization \eqref{eq: chp5_frl_proof} since $H_{f^{*}}(X|Z, S)\leq D$. 
\end{proof}

It  follows from Lemma \ref{lem: 2} that we have by definition of $f^{*}$, if $\frac{\partial R(D)}{\partial D}\leq -1$
\begin{equation}\nonumber
\label{eq: id_P}
I(D) = I_{f^{*}}(Z, X) + D - H(X|S) = R(D) + D - H(X|S),
\end{equation}
which proves the first part of the statement in Theorem 2.1. Moreoover, if $\frac{\partial R(D)}{\partial D}< -1$, $\frac{\partial I(D)}{\partial D} = \frac{\partial R(D)}{\partial D} + 1 <0$, hence $I(.)$ is decreasing for $D$ such that $\frac{\partial R(D)}{\partial D} < -1$.

To prove that if $\frac{\partial R(D)}{\partial D}\geq -1$, $I(D)=0$, we first prove the following Lemma:
\begin{lem}
Let $D^{*}$ denote the value of $D$ such that $\frac{\partial R(D)}{\partial D}= -1$. For $D^{*}\geq D$, $I(D)=I(D^{*})$. 
\end{lem}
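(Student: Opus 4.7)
The plan is to establish $I(D)=I(D^{*})$ for every $D\geq D^{*}$, which together with the earlier formula $I(D)=R(D)+D-H(X|S)$ on the regime $\partial R/\partial D\leq -1$ completes the characterization in Theorem~\ref{thm: ufd}. I would proceed by a double inequality. The upper bound $I(D)\leq I(D^{*})$ is immediate because relaxing the distortion budget only enlarges the feasible set of encoders, so $I(\cdot)$ is non-increasing in $D$; in particular any $f$ that is feasible at level $D^{*}$ remains feasible at every $D\geq D^{*}$.

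For the nontrivial direction $I(D)\geq I(D^{*})$, I would follow the pattern of case~(II) in the proof of Lemma~\ref{lem: 2}. Let $g^{*}$ be an encoder attaining the infimum at distortion budget $D$, and let $D'=H_{g^{*}}(X|Z,S)\leq D$ be the distortion it actually achieves. Lemma~\ref{lem: 1} then gives
\begin{equation}
I_{g^{*}}(Z,S)=I_{g^{*}}(Z,X)+D'-H(X|S)\geq R(D')+D'-H(X|S),
\end{equation}
where the final inequality uses that $g^{*}$ is feasible for the rate-distortion problem at level $D'$ and so its rate $I_{g^{*}}(Z,X)$ is at least $R(D')$.

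The key geometric step is to observe that $\varphi(D):=R(D)+D$ is convex (as the sum of a convex and a linear function) and satisfies $\varphi'(D^{*})=\partial R/\partial D(D^{*})+1=0$ by the very definition of $D^{*}$. Hence $\varphi$ attains its global minimum at $D^{*}$, and $\varphi(D')\geq \varphi(D^{*})$ for every admissible $D'$. Substituting this into the previous display gives $I_{g^{*}}(Z,S)\geq R(D^{*})+D^{*}-H(X|S)=I(D^{*})$, which is the desired lower bound.

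The main subtlety is that once $D\geq D^{*}$ the constraint $H(X|Z,S)\leq D$ need not be tight at the optimum, so one cannot simply plug $D$ into the formula from the earlier part of Theorem~\ref{thm: ufd}; introducing the slack distortion $D'$ and invoking the fact that $R(D)+D$ is globally minimized at $D^{*}$ is what bridges the gap. Everything else reduces to the monotonicity of $I(\cdot)$ and the convexity of $R(\cdot)$ already used in Lemma~\ref{lem: 2}.
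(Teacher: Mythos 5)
Your proof is correct and follows essentially the same route as the paper: monotonicity of $I(\cdot)$ gives $I(D)\leq I(D^{*})$, and the lower bound is obtained by introducing the achieved distortion $D'=H_{g^{*}}(X|Z,S)$, invoking Lemma~\ref{lem: 1} and $I_{g^{*}}(Z,X)\geq R(D')$, and exploiting convexity of $R$ together with $\frac{\partial R(D^{*})}{\partial D}=-1$. The one genuine streamlining is your observation that $\varphi(D)=R(D)+D$ is convex with vanishing derivative at $D^{*}$ and hence globally minimized there, which disposes of all positions of $D'$ in one stroke, whereas the paper splits into the three cases $D'<D^{*}$, $D'=D^{*}$, $D'>D^{*}$ and handles the first by contradiction.
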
 

\begin{proof}
Let $D>D^{*}$. Let $g^{*}$ be a solution of the minimization \eqref{eq: chp5_frl_proof} for $D$. Note that a solution of \eqref{eq: chp5_frl_proof}  for $D^{*}$ respects the constraint of the minimization \eqref{eq: chp5_frl_proof}  for $D$ and thus, $I(D^{*})\geq I(D)$. Let $D^{'}$ denote $H_{g^{*}}(X|Z, S)$. Then, by definition of the rate-distortion objective value \eqref{eq: ib_proof}, we have
\begin{equation}
\begin{split}
I(D) &= I_{g^{*}}(Z, X) + D^{'} - H(X|S) \\
& \geq R(D^{'}) + D^{'} - H(X|S). 
\end{split}
\end{equation}
If $D^{'}< D^{*}$, then we already know that $I(D^{'}) = R(D^{'}) + D^{'} - H(X|S)$ and that $I(D^{'}) > I(D^{*})\geq I(D)$. Moreover, by inequality \eqref{eq: id_P}, $\geq I(D^{'})$, thus $I(D^{'}) > I(D)\geq I(D^{'})$, which is a contradiction. If $D^{'}=D^{*}$, we already know that $I(D)\leq I(D^{*}) = R(D^{*}) +D^{*}-H(X|S)=I(D^{'})\leq I(D)$ and thus that $I(D)=I(D^{*})$. 

It remains to look at the case $D^{'}> D^{*}$. Consider $D^{"}\in [D^{*}, D^{'}]$. By convexity of $R(D)$ we have 
\begin{equation}\nonumber
R(D^{*}) - R(D^{'}) \leq \frac{\partial R(D^{*})}{\partial D} (D^{*} - D^{'}) \overset{(a)}{=} D^{'} - D^{*},
\end{equation}
where $(a)$ comes the fact that $\frac{\partial R(D^{*})}{\partial D}= -1$. It results that by the inequality \eqref{eq: id_P} $I(D)\geq R(D^{*}) + D^{*} -H(X|S)$. Moreover, we already know that $R(D^{*}) + D^{*} -H(X|S)=I(D^{*})$. Hence $I(D^{*})\geq I(D)\geq I(D^{*})$, which proves the equality in Lemma \ref{lem: 2}. 
\end{proof}

\begin{lem}
\label{lem: 3}
Let $D^{**}=H(X|S)$. We have $I(D^{**})=0$.
\end{lem}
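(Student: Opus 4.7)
The plan is to prove the equality $I(D^{**})=0$ by sandwiching: the lower bound $I(D^{**})\geq 0$ is immediate since mutual information is always non-negative, so all the work is in exhibiting a feasible encoder whose code is independent of the sensitive attribute. I would take the trivial encoder $f$ that maps every $x$ to a fixed element $z_{0}\in\mathcal{Z}$ (or, equivalently, any encoder producing a $Z$ that is independent of $(X,S)$). For such an encoder, the induced joint distribution factorises as $p(x,s,z)=p(x,s)\,\delta_{z_{0}}(z)$, so that $I(Z,S)=0$ and also $Z$ carries no information about $X$ given $S$, hence $H(X\mid Z,S)=H(X\mid S)=D^{**}$. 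This makes the distortion constraint $H(X\mid Z,S)\leq D^{**}$ bind, so $f$ is feasible for the minimisation \eqref{eq: chp5_frl_proof} at $D=D^{**}$. Combining the two bounds gives $0\leq I(D^{**})\leq I(Z,S)=0$, which is the claim.

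Once Lemma \ref{lem: 3} is in place, I would close the proof of Theorem \ref{thm: ufd} by patching it to the previous lemma. The previous lemma (with the obvious reading $D\geq D^{*}$) states $I(D)=I(D^{*})$ for every $D\geq D^{*}$; specialising at $D=D^{**}$ and using Lemma \ref{lem: 3} yields $I(D^{*})=I(D^{**})=0$, and therefore $I(D)=0$ for all $D\geq D^{*}$. This matches exactly the ``$0$ otherwise'' clause in the theorem, since the convexity of $R$ together with the boundary condition $R(D^{**})=0$ (achieved by the same constant encoder) implies $\partial R/\partial D\geq -1$ precisely on $[D^{*},D^{**}]$ and $R\equiv 0$ beyond $D^{**}$.

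There is essentially no obstacle here beyond bookkeeping: the whole argument rests on a single concrete construction, and the only subtle point is checking that the constant encoder genuinely lies in the feasible set — which follows from $H(X\mid Z,S)=H(X\mid S)$ whenever $Z\perp\!\!\!\perp (X,S)$. I would state this independence-of-$Z$ step as a one-line observation rather than a separate lemma, and then immediately invoke the non-negativity of $I(Z,S)$ to conclude. This matches the style of Lemmas \ref{lem: 1} and \ref{lem: 2} above and completes the proof of Theorem \ref{thm: ufd}.
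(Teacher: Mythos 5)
Your proof is correct and follows essentially the same route as the paper: exhibit a trivial encoder whose code $Z$ is independent of $(X,S)$, note that it is feasible at $D^{**}$ because $H(X\mid Z,S)=H(X\mid S)=D^{**}$, and conclude $I(D^{**})\leq I(Z,S)=0$, with non-negativity of mutual information giving the reverse inequality. The only cosmetic difference is that the paper phrases the upper bound through the objective $I(Z,X)+H(X\mid Z,S)-H(X\mid S)$ of the reformulated problem rather than through $I(Z,S)$ directly; the two are identical by Lemma \ref{lem: 1}.
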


\begin{proof}
Consider an encoder $g$ that generates a random variable $Z$ independent of $X$. Then $H_{g}(X|Z, S) = D^{**}$ and $I_{g}(Z, X)=0$. Therefore, $g$ respect the constraint of the minimization \eqref{eq: chp5_frl_proof} for $D^{**}$ and $I(D^{**}) \leq I_{g}(Z, X) + H_{g}(X|Z, S) - H(X|S) = 0$. Hence, $I(D^{**}) = 0$.  
\end{proof}

By combining Lemma \ref{lem: 2} and \ref{lem: 3}, we can show that $I(D)=0$ for $D\geq D^{**}$.

 \subsection{Lower Bound on \texorpdfstring{$I(Z, S)$}{I(Z, S)}}
When constructiong unfairness-distitortion curves, we approximate the mutual information $I(Z,S)$ with an adversarial lower bound. For any approximation $q(s|Z)$ of $p(s|Z)$, we have
\begin{equation}
\begin{split}
    I(Z, S)& = H(S) - H(S|Z) \\
    &= H(S) - E_{s,z}[-\log q(s|z)] + KL(p(s|z)||p(s|z) \\
    &\geq H(S) - E_{s,z}[-\log q(s|z)],
    \end{split}
\end{equation}
where the inequality comes from the non-negativity of the Kullback-Leibler divergence $KL(p|q)$. Therefore, we lower bound $I(Z, S)$ with 
\begin{equation}
H(S) - \min_{q}E_{s,z}[-\log q(s|z)],
\end{equation}
where the minimum is taken over classifiers that predict $S$ from $Z$.

\section{Additional Implementation Details}
\subsection{Entropy estimation.}
We follow a standard approach in rate-distortion \cite{theis2017lossy,choi2019variable} and approximate the discrete distribution $p(z_{j}|z_{.<j}, \beta)$ by a continuous distribution $q(z_{j}|z_{.<j}, \beta)$ such that the probability mass of $q$ on the interval $[z_{j} - 1 / 2^{a_{j}(\beta)}, z_{j} + 1 / 2^{a_{j}(\beta)}]$ is equal to $p(z_{j}|z_{.<j}, \beta)$. Therefore,
\begin{equation}
\label{eq: entr2}
    \begin{split}
        H(z|\beta) & = -\displaystyle \sum_{j=1}^{d} E\left[\log p(z_{j}|z_{.<j}, \beta)\right] \\
        & = -\displaystyle \sum_{j=1}^{d} E\left[\log\left(\int_{\frac{-1}{2^{a_{j}}(\beta)}}^{\frac{1}{2^{a_{j}}(\beta)}}q(z_{j} + u|z_{.<j}, \beta) du\right)\right] \\
        & + KL\left(p || \int_{\frac{-1}{2^{a_{j}}(\beta)}}^{\frac{1}{2^{a_{j}}(\beta)}}q(z_{j} + u|z_{.<j}, \beta) du\right) \\
        & \overset{(a)}{\leq} - \displaystyle\sum_{j=1}^{d} E\left[\log\left(\int_{\frac{-1}{2^{a_{j}}(\beta)}}^{\frac{1}{2^{a_{j}}(\beta)}}q(z_{j} + u|z_{.<j}, \beta) du\right)\right] \\
    \end{split}
\end{equation}
where $(a)$ uses the non-negativity of the Kullback-Leibler divergence $KL$ between the true distribution $p(z|\beta)$ and its approximation $q(z|\beta)$ once convolved with a uniform distribution over $[- 1 / 2^{a_{j}(\beta)}, 1 / 2^{a_{j}(\beta)}]$.

We follow \cite{salimans2017pixelcnn++} and for each $j=1, ..., d$ we model  $q(.|z_{.<j}, \beta)$ as a mixture of $K$ logistic distributions with means $\mu_{j, k}(\beta)$, scales $\gamma_{j, k}(\beta)$ and mixtures probability $\pi_{j, k}(\beta)$, which allows to compute exactly the integral term in \eqref{eq: entr2}. Specifically, we compute
\begin{equation}
    \mu_{j, k} = \mu_{j, k}^{0}(\beta) + w^{\mu}_{j, k}(\beta)\Gamma_{j}\odot z_{j},
\end{equation}
and 
\begin{equation}
    \log(\gamma_{j, k}) = \gamma_{j, k}^{0}(\beta) + w^{\gamma}_{j, k}(\beta)\Gamma_{j}\odot z_{j},
\end{equation}
where $\mu_{j, k}^{0}(.), \gamma_{j, k}^{0}()$ are functions from $[0,1]$ to $\mathbbm{R}$; $w^{\mu}_{j k}()$ and $w^{\gamma}_{j, k}()$ are functions from $[0,1]$ to $\mathbbm{R}^{d}$; and, $\Gamma_{j}=(1, 1, .., 1, 0, ...0)$ is a $d-$ dimensional vector equal to one for entry before $j$ and zero otherwise. $\Gamma_{j}$ guarantees that the distribution $q(.|z_{.<j})$ is conditioned only on $z_{.<j}$and not on any $z_{j^{'}}$ for $j^{'}\geq j$. 

The use of logistic distribution allows to compute the upper bound in \eqref{eq: entr2} as $H_{q}(z|\beta)$ where $H_{q}(z|\beta)$ is given by

\begin{equation}\nonumber
\begin{split}
- \displaystyle\sum_{j=1}^{d}E\left[\log\left(\displaystyle\sum_{k=1}^{K}\pi_{j, k}\sigma\left(\frac{z_{j} + \mu_{j, k}(\beta)}{\gamma_{j, k}(\beta)} + \frac{1}{2^{a_{j}(\beta)}}\right)\right.\right. &\\
 \left.\left. - \sigma\left(\frac{z_{j} + \mu_{j, k}(\beta)}{\gamma_{j, k}(\beta)} - \frac{1}{2^{a_{j}(\beta)}}\right)\right)\right]. &
 \end{split}
 \end{equation}

\section{Experimental Protocol}

\subsection{DSprites-Unfair}
We adapt DSprites\footnote{\url{https://github.com/deepmind/dsprites-dataset/}} to our fairness problem. DSprites (\cite{dsprites17}) contains $64$ by $64$ black and white images of various shapes (heart, square, circle), that are constructed from six independent factors of variation: color (black or white); shape (square, heart, ellipse), scales (6 values), orientation (40 angles in $[0, 2\pi]$); x- and y- positions (32 values each). The dataset results in $700K$ unique combinations of factor of variations. We modify the sampling to generate a source of potential unfairness. The sensitive attribute encodes which quadrant of the circle the orientation angle belongs to: $[0, \pi/2]$, $[\pi/2, \pi]$, $[\pi, 3/2\pi]$ and $[3/2\pi, 2\pi]$. All factors of variation but shapes are uniformly drawn. When sampling shapes, we assign to each possible combination of attributes a weight proportional to $1 + 10 \left[\left(\frac{i_{orientation}}{40})\right)^{3} + \left(\frac{i_{shape}}{3}\right)^{3}\right]$ , where $i_{shape}\in \{0, 1, 2\}$ and $i_{orientation}=\{0, 1, ..., 39\}$. This weight scheme generates a correlation between shapes and sensitive attributes (orientation).

\subsection{Adults / Heritage}
The \textbf{Adults} dataset \footnote{https://archive.ics.uci.edu/ml/datasets/adult}  contains $49K$ individuals with information to professional occupation, education attainment, capital gains, hours worked, race and marital status. We consider two variants of the Adults dataset: \textbf{Adults-Gender} and \textbf{Adults-Gender-Race}. In Adults-Gender we define as sensitive attributes the gender to which each individual self-identifies to.  In Adults-Gender-Race, we define as sensitive attribute an intersection of the gender and race an individual self-identifies to. For both Adults-Gender and Adults-Gender-Race, the downstream task label $Y$ correspond to whether individual income exceeds $50K$ per year.

The \textbf{Health Heritage} dataset \footnote{https://foreverdata.org/1015/index.html} contains $95K$ individuals with $65$ features related to clinical diagnoses and procedure, lab results, drug prescriptions and claims payment aggregated over $3$ years (2011-2013). We define as sensitive attributes a $18-$ dimensional variable that intersects the gender which individuals self-identify to and their reported age. 
The downstream task label $Y$ relates to whether an individual has a non-zero Charlson comorbidity Index, which is an indicator of a patient's 10-year survival rate.

\subsection{Comparative Methods}

\paragraph{LATFR:} This method ~\cite{madras2018learning} controls the mutual information $I(Z, S)$ via the cross-entropy of an adversary that predicts $S$ from $Z$. LATFR controls the fairness properties of the representation $Z$ with a single parameter in $\{ 0.1, 0.2, 0.3, 0.5, 0.7, 1.0, 1.5, 2.0, 3.0, 4.\}$ as prescribed in the original paper. 

\paragraph{Maxent-ARL:} This method \cite{roy2019mitigating} is a variant of LATFR that replaces the cross-entropy of the adversary with the entropy of is prediction. The fairness properties of $Z$ are controlled by a single parameter that we vary between $0$ and $1$ in steps of $0.1$, between $1$ and $10$ in steps of 1 and then between $10$ and $100$ in steps of 10.

\paragraph{$\beta-$VAE:}This method \cite{higgins2016beta,gitiaux2021fair} controls for $I(Z, S)$ by controlling the Kullback-Leibler divergence between $p(z)$ and an isotropic Gaussian prior. The fairness properties of the representations are controlled via the coefficient $\beta$ on the Kullback-Leibler divergence term: larger values of $\beta$ force $Z$ be more noisy, reduce the capacity of the channel between the data and the representation and thus, the mutual information $I(Z, X)$. We vary the value of $\beta$ between 0 and 1 in steps of 0.05.

\paragraph{CVIB:} This method ~\cite{moyer2018invariant} controls for $I(Z, S)$ via both the Kullback-Leibler divergence between $p(z)$ and an isotropic Gaussian prior (as in $\beta$-VAE) and an information-theory upper bound. The first term is controlled by a parameter $\beta$ that takes values in $\{0.001, 0.01, 0.1\}$; the second is controlled by a parameter $\lambda$ that vary between 0.01 to 0.1 in steps of 0.01 and 0.1 to 1.0 in steps of 0.1 (\cite{gupta2021controllable}).

\subsection{Architectures} 
\begin{table*}[hbpt]
    \centering
    \begin{tabular}{l|l|l|l}
    \toprule
    Dataset & Encoder & Decoder &Activation \\
    & & & \\
    \midrule
    DSprites & Conv(1, 32, 4, 2), Conv(32, 32, 4, 2) & Linear(28, 128), Linear(128, 1024) & ReLU \\
     & Conv(32, 64, 4, 2), Conv(64, 64, 4, 2) & ConvT2d(64, 64, 4, 2), ConvT2d(64, 32, 4, 2) &  \\
      & Linear(1024, 128) & ConvT2d(32, 32, 4, 2), ConvT2d(32, 61, 4, 2) &  \\
      \hline
      Adults & Linear(9, 128), Linear(128, 128)& Linear(8, 128), Linear(128, 128) & ReLU \\
       &  Linear(128, 8) & Linear(128, 9)  & \\
       \hline
Heritage       & Linear(65, 256), Linear(256, 256) & Linear(12, 256), Linear(256, 256)& ReLU \\
      & Linear(256, 12) & Linear(256, 65)  &  \\
       \bottomrule
\end{tabular}
\caption{\textbf{Architecture details.} $Conv2d(i, o, k, s)$ represents a 2D-convolutional layer with input channels $i$, output channels $o$, kernel size $k$ and stride $s$.  $ConvT2d(i, o, k, s)$ represents a 2D-deconvolutional layer with input channels $i$, output channels $o$, kernel size $k$ and stride $s$. $Linear(i, o)$ represents a fully connected layer with input dimension $i$ and output dimension $o$. Activations are not applied on the last layer of the decoder.}
\label{tab: arch}
\end{table*}

\begin{table}[h]
    \centering
    \begin{tabular}{l|l|l|l|l}
    \toprule
    Dataset & Number of iterations & Learning rate \\
    \midrule
    DSprites & 546K &  $0.3 \times 10^{-4}$ \\
      \midrule
      Adults & 27K & $0.3 \times 10^{-4}$ \\
       \midrule
       Heritage & 74K & $0.3 \times 10^{-4}$ \\
       \bottomrule
\end{tabular}
\caption{Hyperparameter values for SoFaiR / MSFaiR.}
\label{tab: hyper}
\end{table}

\paragraph{Encoder-decoders.} For the DSprites dataset, the autoencoder architecture -- taken directly from \cite{creager2019flexibly} -- includes $4$ convolutional layers and $4$ deconvolutional layers and uses ReLU activations. For Adults and Heritage, the encoder and decoder are made of fully connected layers with ReLU activations. Table \ref{tab: arch} shows more architectural details for each dataset. Moreover, means $\mu$, scales $\gamma$ and mixture probabilities $\pi$ are modeled as fully connected linear layers with input dimension $1$ and output dimension d, i.e. the dimension of the latent space. We choose $K=5$ logistic distributions in the mixture. We also set the maximum number of bits per dimension, $A$, to be equal to 8. Other hyperparameter values are in Table \ref{tab: hyper}.

\paragraph{Auditor and task classifiers.} Downstream classifiers and fairness auditors are multi-layer perceptrons with $2$ hidden layers of $256$ neurons each. Learning rates for both auditing and downstream tasks are set to $0.001$

\subsection{Additional Details on Area under unfairness-distortion curves.} 
We compute the area under unfair-distortion curve, AUFDC, as follows: 
\begin{itemize}
\item Since we only generate a finite number of points, empirical fairness-distortion curves do not have to exhibit a perfectly decreasing and smooth behavior.Therefore, to compute our AUFDC metric,  we first filter out the points on the curve that have higher distortion than points with higher $I(Z, S)$. That is, for any point $(D, I)$, we remove all points $(D^{'}, I^{'})$ for which $D^{'}>D$ and $I^{'}>I$.
\item We estimate the largest obtainable mutual information $I_{max}$ between $Z$ and $S$ as
\begin{equation}
H(S) - \min_{c}E_{s, x}[-\log c(s|x)],
\end{equation}
where $c:\mathcal{X}\rightarrow \mathcal{S}$ are classifiers that predict $S$  from the data $X$. That is, we use an adversarial estimate of $I(X, S)$ and use this estimate as $I_{max}$, since by the data processing inequality ~\cite{cover2012elements}, $I(Z, S)\leq I(X, S)$. 
\item For models that do not reach $I(Z, S)=0$, we compute the distortion $D_{max}$ obtained by generating random representations; and then, we add to the AUFDC score, the area $I_{min} \times (D_{max} - D_{min})$ where  $(D_{min}, I_{min})$ is the point the furthest on the bottom right of the unfairness-distiortion curve achieved by the model.
\item To allow comparison across datasets, we normalize the value of AUFD by the area of the rectangle $[0, D_{max}]\times[0, I_{max}]$.
\end{itemize}

\subsection{Pareto Front Protocol}
To generate the Pareto fronts in Figure 4 of the main text and Figure 2 of the supplemental file, we implement the following protocol:
\begin{itemize}
    \item Train an encoder-decoder architecture and freeze its parameters;
    \item Train an auditing classifier $c:\mathcal{Z}\rightarrow \mathcal{S}$ to predict $S$ from $Z$;
    \item Train a downstream task classifier $T:\mathcal{Z}\rightarrow\mathcal{Y}$ to predict a task $Y$ from $Z$.
\end{itemize}
The encoder-decoder does not access the task labels during training and our representation learning framework remains unsupervised with respect to downstream task labels.

\section{Additional Details on Quantization and Interpretability}

\subsection{Interpretability}
Each dimension $z_{j}(\beta)$ of the quantized representation is encoded into $a{j}(\beta)$ bits $b_{j,1}, b_{j, 2}, ..., b_{j, a(\beta)}$, where $b_{j, l}\in\{0, 1\}$ for $l=1, ..., a_{j}(\beta)$. For $\beta^{'}<\beta$ and for $j=1, ..., d$, we have
\begin{equation}
    z_{j}(\beta^{'})= z_{j}(\beta) + \displaystyle\sum_{l=a_{j}(\beta)}^{a_{j}(\beta^{'})}b_{j, l}2^{-l}.
\end{equation}
Therefore, we have a tractable and interpretable relation between $z_{j}(\beta^{'})$ and $z_{j}(\beta)$. In section 5.3 (RQ3), we compute  the first component of the term $ \displaystyle\sum_{l=a_{j}(\beta)}^{a_{j}(\beta^{'})}b_{j, l}2^{-l}$ and look at its correlation with each feature of the data $X$. 

\subsection{Entropy estimation.}
To estimate the entropy of the representation $Z$, we use an auto-regressive factorization to write the discrete distribution  $P(z|\beta)$ over the representation $Z$
\begin{equation}
    P(z|\beta) = \displaystyle\prod_{j=1}^{d}P(z_{j}|z_{.<j}, \beta),
\end{equation}
where the order of the dimension $j=1, ..., d$ is arbitrary and $z_{.<j}$ denotes the dimension between $1$ and $i-1$. This is similar to the entropy estimation in ~\cite{gitiaux2021fair}. However, unlike \cite{gitiaux2021fair} who model $P$ as a discrete distribution, we follow a more standard approach in rate-distortion \cite{theis2017lossy,choi2019variable} and approximate the discrete distribution $p(z_{j}|z_{.<j}, \beta)$ by a continuous distribution $q(z_{j}|z_{.<j}, \beta)$ such that the probability mass of $q$ on the interval $[z_{j} - 1 / 2^{a_{j}(\beta)}, z_{j} + 1 / 2^{a_{j}(\beta)}]$ is equal to $p(z_{j}|z_{.<j}, \beta)$. Therefore,
\begin{equation}
\label{eq: entr}
    \begin{split}
        H(z|\beta) & = -\displaystyle \sum_{j=1}^{d} E\left[\log p(z_{j}|z_{.<j}, \beta)\right] \\
        & = -\displaystyle \sum_{j=1}^{d} E\left[\log\left(\int_{-1/2^{a_{j}(\beta)}}^{1/2^{a_{j}(\beta)}}q(z_{j} + u|z_{.<j}, \beta) du\right)\right] \\
        & + KL\left(p||\int_{-1/2^{a_{j}(\beta)}}^{1/2^{a_{j}(\beta)}}q(z_{j} + u|z_{.<j}, \beta) du\right) \\
        & \overset{(a)}{\leq} -\displaystyle \sum_{j=1}^{d} E\left[\log\left(\int_{-1/2^{a_{j}(\beta)}}^{1/2^{a_{j}(\beta)}}q(z_{j} + u|z_{.<j}, \beta) du\right)\right], \\
    \end{split}
\end{equation}
where $(a)$ uses the non-negativity of the Kullback-Leibler divergence $KL$ between the true distribution $p(z|\beta)$ and its approximation $q(z|\beta)$ once convolved with a uniform distribution over $[- 1 / 2^{a_{j}(\beta)}, 1 / 2^{a_{j}(\beta)}]$.

We follow \cite{salimans2017pixelcnn++} and for each $j=1, ..., d$ we model  $q(.|z_{.<j}, \beta)$ as a mixture of $K$ logistic distributions with means $\mu_{j, k}(\beta)$, scales $\gamma_{j, k}(\beta)$ and mixtures probability $\pi_{j, k}(\beta)$, which allows to compute exactly the integral term in \eqref{eq: entr}. Specifically, we compute
\begin{equation}
    \mu_{j, k} = \mu_{j, k}^{0}(\beta) + w^{\mu}_{j, k}(\beta)\Gamma_{j}\odot z_{j},
\end{equation}
and 
\begin{equation}
    \log(\gamma_{j, k}) = \gamma_{j, k}^{0}(\beta) + w^{\gamma}_{j, k}(\beta)\Gamma_{j}\odot z_{j},
\end{equation}
where $\mu_{j, k}^{0}(.), \gamma_{j, k}^{0}()$ are functions from $[0,1]$ to $\mathbbm{R}$; $w^{\mu}_{j k}()$ and $w^{\gamma}_{j, k}()$ are functions from $[0,1]$ to $\mathbbm{R}^{d}$; and, $\Gamma_{j}=(1, 1, .., 1, 0, ...0)$ is a $d-$ dimensional vector equal to one for entry before $j$ and zero otherwise. $\Gamma_{j}$ guarantees that the distribution $q(.|z_{.<j})$ is conditioned only on $z_{.<j}$ only and not on any $z_{j^{'}}$ for $j^{'}\geq j$. 

The use of logistic distribution allows to compute the upper bound in \eqref{eq: entr} as $H_{q}(z|\beta)$ where $H_{q}(z|\beta)$ is given by

\begin{equation}
\begin{split}
- \displaystyle\sum_{j=1}^{d}E\left[\log\left(\displaystyle\sum_{k=1}^{K}\pi_{j, k}\sigma\left(\frac{z_{j} + \mu_{j, k}(\beta)}{\gamma_{j, k}(\beta)} + \frac{1}{2^{a_{j}(\beta)}}\right)\right.\right. &\\
 \left.\left. - \sigma\left(\frac{z_{j} + \mu_{j, k}(\beta)}{\gamma_{j, k}(\beta)} - \frac{1}{2^{a_{j}(\beta)}}\right)\right)\right]. &
 \end{split}
 \end{equation}
 
 The adaptive information bottleckneck can be written as:
\begin{equation}
\min_{g, h_{e}, h_{a}, \mu, \gamma, w, \pi}E[-\log p(x|g(z, s, \beta)) + \beta H_{q}(z|\beta)].
\end{equation}

\begin{figure*}
    \centering
    \includegraphics[width=1.0\textwidth]{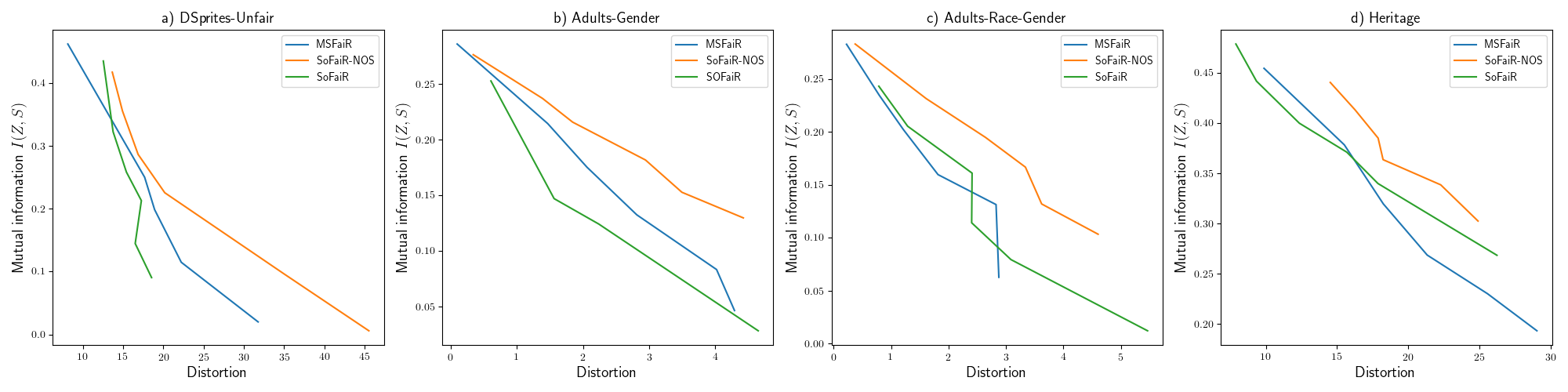}
    \caption{Ablation study for a) DSprites, b) Adults-Gender,  c) Adults-Race-Gender(left) and d) Heritage. This compares unfairness-distortion curves generates by our single shot approach SoFaiR to the ones generated by its multi-shot counterpart MSFaiR; and, to the ones generated by SoFaiR-NOS, which is similar to SoFaiR but for the decoder that does not receive the sensitive attribute $S$ as an input. }
    \label{fig:ablation}
\end{figure*}
\section{Additional Results}

\subsection{Ablation Study}
In Figure \ref{fig:ablation}, we plot the unfairness-distortion curves that correspond to the AUFDC that we report in Table 1 of the main text. We report the median value of distortion for a given level of mutual information $I(Z, S)$, where the median is taken over $10$ similar models trained with different seeds. The lower is the distortion for a given value of $I(Z, S)$, the better the fair representation learning method. Conclusions from Figure \ref{fig:ablation} are similar to the ones from Table 1. SoFaiR outperforms MSFaiR for Adults-Gender and Adults-Race-Gender at all values of $I(Z, S)$, while it is outperformed by MSFaiR for Heritage at low values of $I(Z, S)$. Moreover, for all datasets, SoFaiR outperforms SoFaiR-NOS, which confirms that rate-distortion solutions to fair representation learning need the decoder to receive the sensitive attribute as a side channel. In other words, the relation between unfairness-distortion and rate-distortion curves presented in Theorem 2.1 is tractable only if we use $E[-\log(p(x|z, s)]$ as a measure of distortion and would not necessarily hold if we were using  $E[-\log(p(x|z)]$ instead.

\subsection{Pareto Fronts for DSprites-Unfair and Heritage}
In Figure \ref{fig:pareto-front2}, we plot Pareto fronts for DSPrites-Unfair and Heritage. Both SoFaiR and MSFaiR outperform comparative  methods for DSPrites-Unfair and are competitive for Heritage. Moreover, on Heritage, SoFaiR is on par with MSFaiR, although it can generate the Pareto front with one trained model, while MSFaiR requires to re-train a model for each point of the front.

\begin{figure}[h]
    \centering
    \includegraphics[width=0.5\textwidth]{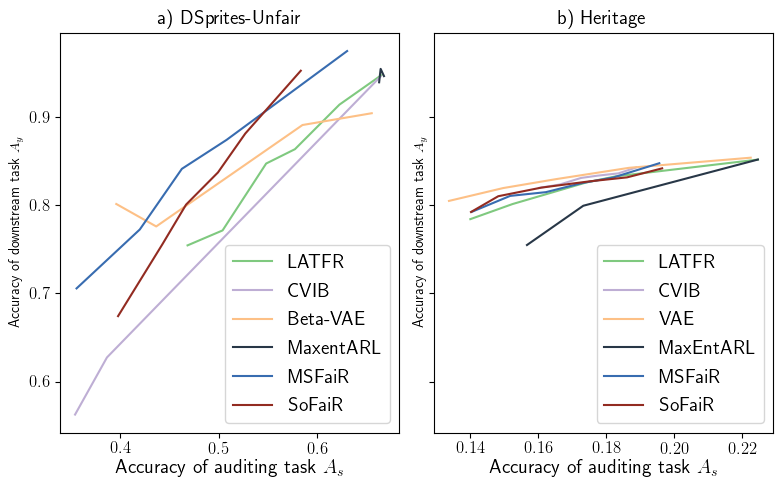}
    \caption{Pareto fronts for a) DSprites-Unfair (left),  b) Heritage (right). The downstream task label is whether income is larger than 50K. }
    \label{fig:pareto-front2}
\end{figure}

\subsection{Additional Plots for Bit Disparity}
In Figure \ref{fig:inter_adults2}, we show for Adults-Gender how moving up the unfairness-distortion curve (lower $\beta$) unmasks for each dimension of the representation additional bits and how these additional bits show higher disparity or correlation with respect to the sensitive attribute. 

\begin{figure*}
    \centering
    \includegraphics[width=0.95\textwidth]{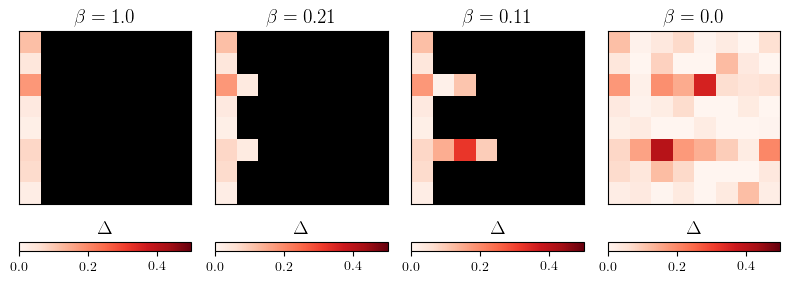}
    \caption{Unmasked bit stream for different values of the fairness coefficient $\beta$ for the Adults-Gender dataset. Rows represents a dimension of the representation $Z$. Each colored square represents an unmasked bit. Darker bits exhibit higher bit demographic disparity $\Delta$. Black squares represent masked bits. As $\beta$ decreases, SoFair unmasks more bits for each dimension of the representation $Z$. And, bits with lower disparity are more likely to be the first unmasked.}
    \label{fig:inter_adults2}
\end{figure*}

\pagebreak
\bibliographystyle{unsrt}
\bibliography{references}

\end{document}